\documentclass{article}

\usepackage[preprint]{neurips_2026}

\usepackage[utf8]{inputenc}
\usepackage[T1]{fontenc}
\usepackage{hyperref}
\usepackage{url}
\usepackage{booktabs}
\usepackage{amsfonts}
\usepackage{amsmath}
\usepackage{amssymb}
\usepackage{nicefrac}
\usepackage{microtype}
\usepackage{xcolor}
\usepackage{graphicx}
\usepackage{algorithm}
\usepackage{algorithmic}
\usepackage{amsthm}
\usepackage{tikz}
\usepackage{multirow}
\usetikzlibrary{arrows.meta, positioning, calc, decorations.pathreplacing, fit}
\usepackage[table]{xcolor}

\newtheorem{proposition}{Proposition}
\newtheorem{definition}{Definition}
\newtheorem{remark}{Remark}

\newcommand{\R}{\mathbb{R}}
\newcommand{\btheta}{\boldsymbol{\theta}}

\title{GPart: End-to-End Isometric Fine-Tuning via Global Parameter Partitioning}

\author{
\textbf{Paolo~Mandica}$^{1,*,\dagger}$ \and
\textbf{Micha\l{}~Brzozowski}$^{1,*}$ \and
\textbf{Zuzanna~Dubanowska}$^{1}$ \and
\textbf{Neo~Christopher~Chung}$^{1,2}$ \\
[0.6em]
$^{1}$Samsung AI Center, Warsaw, Poland \quad
$^{2}$University of Warsaw, Poland \\
$^{*}$Equal contribution \\
$^{\dagger}$Corresponding author: \texttt{p.mandica@samsung.com}
}

\begin{document}

\maketitle

\begin{abstract}
Low-rank adaptation (LoRA) has become the dominant paradigm for parameter-efficient fine-tuning (PEFT) of large language models (LLMs). However, its bilinear structure introduces a critical limitation: the mapping from trainable parameters to weight updates is not distance-preserving, distorting the optimization landscape.  Methods that project a low-dimensional vector into LoRA's parameter space, such as Uni-LoRA, improve parameter efficiency, but the subsequent bilinear LoRA map breaks end-to-end isometry, leaving the core distance-preservation problem unresolved. We propose \textbf{GPart} (Global Partition fine-tuning), a highly parameter-efficient fine-tuning method which removes the low-rank bottleneck entirely. Our method uses a single isometric partition matrix to map a $d$-dimensional trainable vector directly into the full weight space of the model. The result is an extremely minimal fine-tuning pipeline: one random projection, end-to-end isometric, with a single clean hyperparameter ($d$) and storage cost of $d+1$ values (the trainable vector plus a random seed). GPart builds on the theoretical premise that effective fine-tuning can emerge from random low-dimensional subspaces of the full weight space, without imposing low-rank matrix structure.
We empirically demonstrate the superior or comparable performance of GPart to existing PEFT methods on natural language understanding, computer vision tasks, and mathematical reasoning. Overall, GPart achieves state-of-the-art efficiency and performance by removing structural constraints, offering a straightforward and elegant path to PEFT.
\end{abstract}

\section{Introduction}
\label{sec:intro}

Fine-tuning pretrained models on downstream tasks is effective, but updating all parameters becomes computationally prohibitive as models grow. Parameter-efficient fine-tuning (PEFT) addresses this by restricting updates to a low-dimensional trainable subspace. Among PEFT methods, LoRA~\citep{hu2022lora} is the most widely used: for a weight matrix $W \in \mathbb{R}^{m \times n}$, it parameterizes the update as
\begin{equation}
\Delta W = BA, \qquad B \in \mathbb{R}^{m \times r}, \; A \in \mathbb{R}^{r \times n},
\label{eq:lora}
\end{equation}
so that only $r(m+n)$ parameters are trained per layer.

LoRA is computationally efficient and empirically strong, but its low-rank bilinear parameterization imposes additional structure on the trainable subspace. In particular, the map from trainable parameters $(A,B)$ to the weight update $BA$ is not an isometry, so Euclidean distances in parameter space are not preserved in weight space. As a result, the geometry seen by the optimizer in the trainable coordinates need not align with the geometry of the induced weight updates.

\begin{figure}[ht]
    \centering
    \includegraphics[width=\linewidth]{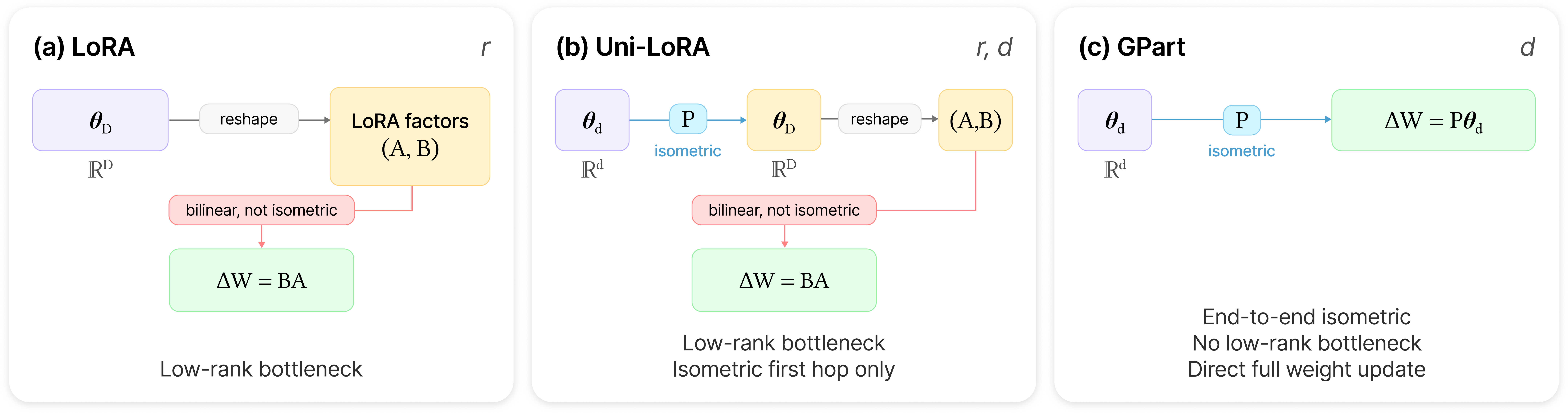}
    \caption{Comparison of PEFT parameterizations. LoRA and Uni-LoRA construct weight updates through the bilinear map $\Delta W = BA$, which breaks end-to-end distance preservation. GPart instead projects the trainable vector directly into full weight space with a seed-generated partition matrix $P$, yielding a one-step isometric parameterization with $d$ as the only hyperparameter.}
    \label{fig:diagram_1}
\end{figure}

Recent work has pushed PEFT to even smaller trainable parameter budgets. VeRA~\citep{kopiczko2024vera} freezes random matrices and trains only scaling vectors. Uni-LoRA~\citep{li2025unilora} further compresses the trainable space by optimizing a single $d$-dimensional vector $\btheta_d$, which is mapped into the LoRA parameter space $\mathbb{R}^D$ through a random partition matrix $P \in \mathbb{R}^{D \times d}$ satisfying $P^\top P = I_d$. This projection is isometric, enabling a compact representation in which the trainable state is given by $\btheta_d$ together with a random seed.

However, in Uni-LoRA the isometry holds only for the projection into LoRA parameter space:
\begin{equation}
\mathbb{R}^d \xrightarrow{P,\ \text{isometry}} \mathbb{R}^D \xrightarrow{(A,B)\mapsto BA} \mathbb{R}^N.
\label{eq:unilora-pipeline}
\end{equation}
The second stage remains the LoRA bilinear map, so the overall map from $\btheta_d$ to weight space is not isometric. Thus, although Uni-LoRA removes redundancy within LoRA’s parameterization, it still inherits the low-rank bottleneck and the geometric distortion induced by the map $(A,B)\mapsto BA$.

We propose \textbf{GPart} (Global Partition fine-tuning), which removes the intermediate low-rank parameterization entirely. Instead of projecting into LoRA space, GPart maps the trainable vector directly into the full weight space:
\begin{equation}
W = W_0 + P\btheta_d, \qquad P \in \mathbb{R}^{N \times d}, \qquad P^\top P = I_d.
\label{eq:gpart}
\end{equation}
This yields a single linear map
\begin{equation}
\mathbb{R}^d \xrightarrow{P,\ \text{isometry}} \mathbb{R}^N,
\label{eq:gpart-pipeline}
\end{equation}
so the trainable coordinates are connected to the weight update through an end-to-end isometric parameterization of the optimized subspace.

This parameterization also simplifies model selection. In LoRA, the rank $r$ controls both the size and the structure of the low-rank update. In Uni-LoRA, the trainable budget is controlled by $d$, but $r$ must still be chosen because the method retains the LoRA factorization. GPart removes this extra choice: $d$, the number of partition groups, is the only hyperparameter controlling subspace size. Figure~\ref{fig:diagram_1} provides an overview of the three parameterizations.

Conceptually, GPart reconnects PEFT to intrinsic-dimension results. \citet{aghajanyan2021} showed that strong fine-tuning performance can emerge from optimization in random low-dimensional subspaces of the full parameter space. GPart follows this perspective directly while retaining the storage efficiency emphasized by VeRA and Uni-LoRA.

Our contributions are as follows:
\begin{itemize}
    \item We introduce GPart, a PEFT method that maps a \(d\)-dimensional trainable vector directly into the full weight space via a random partition matrix, eliminating the intermediate low-rank factorization used in LoRA-based approaches.
    \item We show theoretically that GPart preserves Euclidean geometry within the trainable subspace, whereas the LoRA bilinear map does not.
    \item We empirically show that GPart is an effective alternative to Uni-LoRA and other PEFT methods across both encoder and decoder settings: at low parameter budgets, it outperforms existing approaches on natural language understanding and computer vision benchmarks while remaining competitive on decoder-only models and mathematical reasoning tasks.
\end{itemize}
\section{Related Work}
\label{sec:related}

\paragraph{Low-rank adaptation.}
LoRA~\citep{hu2022lora} parameterizes weight updates as a low-rank product, \(\Delta W = BA\), and has become a de facto standard to parameter-efficient fine-tuning (PEFT). A number of follow-up methods modify this parameterization. For example, DoRA~\citep{liu2024dora} separates magnitude and direction, while other variants such as AdaLoRA~\citep{zhang2023adalora} adapt the rank allocation across layers. Despite these differences, these methods retain the low-rank bilinear structure that maps trainable parameters to weight updates through \(BA\).

\paragraph{Hyperefficiency in PEFT.}
Following the introduction of LoRA, related PEFT methods are introduced to further reduce the number of trainable parameters. BitFit~\citep{zaken2022bitfit} updates only bias parameters, showing that strong adaptation can be achieved with extremely small trainable subsets. VeRA~\citep{kopiczko2024vera} freezes random matrices and learns only scaling vectors, enabling compact storage through the learned parameters and a random seed. FourierFT~\citep{gao2024fourierft} reparameterizes weight updates in the frequency domain, learning a small number of Fourier coefficients per layer to reconstruct the full update via the inverse discrete Fourier transform. Uni-LoRA~\citep{li2025unilora} trains a single low-dimensional vector and projects it into the LoRA parameter space using a random partition matrix. GPart is closest in spirit to VeRA and Uni-LoRA in its use of seed-generated random projections, but differs in that it maps directly into the full weight space rather than into an intermediate LoRA parameterization.

\paragraph{Intrinsic dimensionality and random subspaces.}
Our work is also closely related to intrinsic-dimension approaches to fine-tuning. \citet{aghajanyan2021} showed that effective fine-tuning can often be achieved by optimizing within a random low-dimensional subspace of the full parameter space. Their method projects a \(d\)-dimensional trainable vector into \(\mathbb{R}^N\) using the Fastfood transform~\citep{le2013fastfood}, thereby directly parameterizing updates in the ambient weight space. GPart follows the same high-level random-subspace perspective, but uses a sparse partition-based projection instead of a structured dense transform.

\paragraph{Hash-based parameter sharing.}
GPart is also related to earlier work on parameter sharing through hashing. HashedNet~\citep{chen2015hashednet} compresses neural networks by assigning weights to shared hash buckets, so that multiple weights reuse the same learned parameter. This mechanism is algebraically similar to the partition matrix used in GPart, where each parameter index is assigned to one of \(d\) groups. The setting, however, is different: HashedNet was introduced for model compression and training compact models, whereas GPart uses the same type of sharing structure to parameterize fine-tuning updates around a pretrained model.

\section{Method}
\label{sec:method}

\begin{figure}[t]
    \centering
    \includegraphics[width=\linewidth]{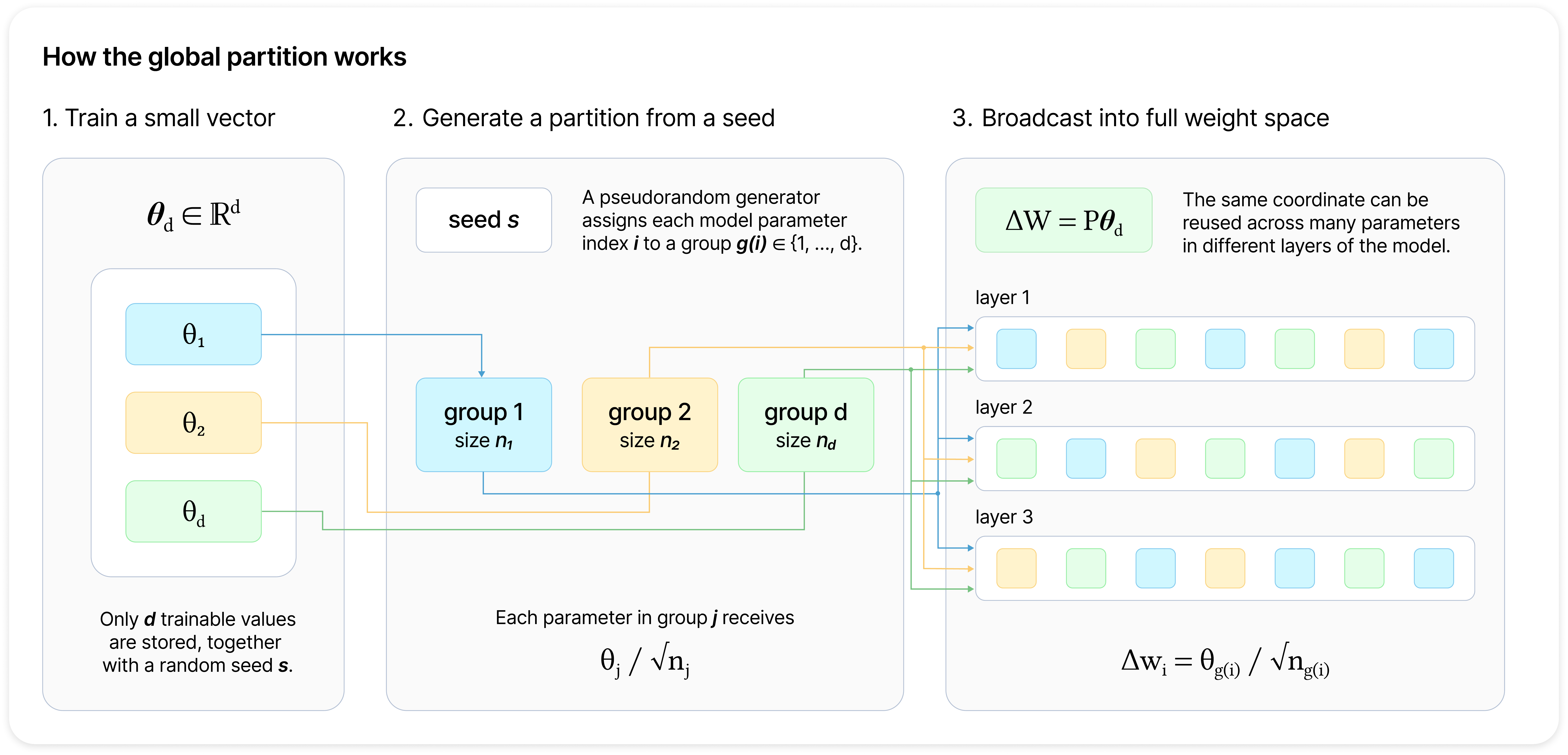}
    \caption{\textbf{Overview of GPart.} A $d$-dimensional trainable vector $\btheta_d$ is broadcast into the full weight space via a random partition generated from a seed $s$. Each model parameter $w_i$ is assigned to a group $g(i) \in \{1, \ldots, d\}$ and updated as $\Delta w_i = \theta_{g(i)} / \sqrt{n_{g(i)}}$, preserving isometry. The entire fine-tuned model is recovered from only $d + 1$ stored values.}
    \label{fig:diagram_2}
\end{figure}

We begin by describing GPart in the vectorized weight space of the pretrained model and comparing it with the LoRA-based factor space; we then formalize the construction of the partition matrix and the resulting forward and backward passes. Figure~\ref{fig:diagram_2} provides an overview of the full pipeline.

\subsection{Vectorized weight space}

Consider a pretrained model with \(L\) adapted weight matrices \(W^{(1)}, W^{(2)}, \ldots, W^{(L)}\), possibly of different shapes. We flatten each matrix and concatenate the results into a single parameter vector
\begin{equation}
w_0 = \mathrm{Concat}\!\Big(\mathrm{vec}\big(W^{(1)}\big),\; \mathrm{vec}\big(W^{(2)}\big),\; \ldots,\; \mathrm{vec}\big(W^{(L)}\big)\Big) \in \mathbb{R}^N,
\label{eq:vectorize}
\end{equation}
where
\begin{equation*}
N = \sum_{\ell=1}^L m_\ell n_\ell
\end{equation*}
is the total number of adapted parameters.

For comparison, Uni-LoRA parameterizes updates in the LoRA factor space rather than in $\mathbb{R}^N$. For each adapted layer $\ell$, LoRA introduces factors $B^{(\ell)} \in \mathbb{R}^{m_\ell \times r}$ and $A^{(\ell)} \in \mathbb{R}^{r \times n_\ell}$. Flattening and concatenating these variables across layers gives
\begin{equation}
\btheta_D = \mathrm{Concat}\!\Big(\mathrm{vec}\big(B^{(1)}\big),\; \mathrm{vec}\big(A^{(1)}\big),\; \ldots,\; \mathrm{vec}\big(B^{(L)}\big),\; \mathrm{vec}\big(A^{(L)}\big)\Big) \in \mathbb{R}^D,
\label{eq:unilora-concat}
\end{equation}
with
\begin{equation*}
    D = \sum_{\ell=1}^L r(m_\ell + n_\ell),
\end{equation*}
typically satisfying $D \ll N$.



Unlike Uni-LoRA, GPart operates directly in the model weight space \(\mathbb{R}^N\). It therefore requires neither low-rank factors nor a bilinear map from factor space back to model space.

\subsection{Partition matrix}

Let \(d \ll N\) denote the dimension of the trainable subspace, where \(N\) is the total number of adapted parameters. GPart defines a sparse matrix \(P \in \mathbb{R}^{N \times d}\) by first flattening these adapted parameters into a global vector of length \(N\), applying a seed-dependent pseudorandom permutation, and then splitting the permuted sequence into \(d\) disjoint groups. This yields a global assignment map
\begin{equation*}
g : \{1,\ldots,N\} \to \{1,\ldots,d\},
\end{equation*}
such that each parameter belongs to exactly one group and every group is nonempty.

For each group \(j\), define
\begin{equation*}
n_j = \big|\{i : g(i) = j\}\big|.
\end{equation*}
The matrix \(P\) is then given entrywise by
\begin{equation}
P_{ij} =
\begin{cases}
\frac{1}{\sqrt{n_j}}, & \text{if } g(i)=j,\\
0, & \text{otherwise.}
\end{cases}
\label{eq:partition-matrix}
\end{equation}

By construction, each row of \(P\) contains exactly one nonzero entry, while distinct columns have disjoint supports. Since every group is nonempty, each column has unit norm, and therefore
\begin{equation}
P^\top P = I_d.
\label{eq:ptp}
\end{equation}
Hence, \(P\) is an isometric embedding from \(\mathbb{R}^d\) into \(\mathbb{R}^N\).

The assignment is global: parameters are grouped across the entire model rather than separately within each layer, and in practice GPart is implemented through the seed-defined assignment \(g(\cdot)\) and the group sizes \(\{n_j\}_{j=1}^d\), without explicitly materializing \(P\). Figure~\ref{fig:diagram_2} illustrates both the partition construction and the induced broadcast update.

\subsection{Forward and backward passes}

Given the pretrained weight vector \(w_0 \in \mathbb{R}^N\) and trainable parameters \(\btheta_d \in \mathbb{R}^d\), GPart defines the adapted weights as
\begin{equation}
w = w_0 + \Delta w, \qquad \Delta w = P\btheta_d.
\label{eq:forward}
\end{equation}
Equivalently, each parameter \(i\) receives the update
\begin{equation}
\Delta w_i = \frac{\theta_{g(i)}}{\sqrt{n_{g(i)}}}.
\label{eq:coordinate-update}
\end{equation}
Thus, all parameters assigned to the same group share one trainable value, normalized by the group size. Computing \(P\btheta_d\) requires only a single pass over the \(N\) parameters and no explicit matrix multiplication.

For a loss \(\mathcal{L}(w)\), the gradient with respect to \(\btheta_d\) is
\begin{equation}
\nabla_{\btheta_d} \mathcal{L} = P^\top \nabla_w \mathcal{L}.
\label{eq:backward}
\end{equation}
In coordinates,
\begin{equation}
(\nabla_{\btheta_d} \mathcal{L})_j
=
\sum_{i:\,g(i)=j}
\frac{(\nabla_w \mathcal{L})_i}{\sqrt{n_j}}.
\label{eq:group-gradient}
\end{equation}
The backward pass therefore reduces to accumulating normalized gradient sums within each group, which again requires \(O(N)\) work.

We initialize the trainable vector \(\btheta_d\) at zero. Since GPart is linear in \(\btheta_d\), this gives \(\Delta w = P\btheta_d = 0\) at initialization, so optimization starts exactly from the pretrained model \(w_0\). Unlike bilinear parameterizations such as LoRA, no symmetry-breaking random initialization is required: from Equation~\eqref{eq:backward}, the gradient with respect to \(\btheta_d\) is generally nonzero even when \(\btheta_d = 0\).

\subsection{Storage and hyperparameter}

The adapted model is fully specified by the seed \(s\), which regenerates the partition, and the trainable vector \(\btheta_d \in \mathbb{R}^d\), requiring only \(d+1\) stored values. This matches Uni-LoRA while being more storage-efficient than standard LoRA (\(O(D)\) parameters) and full fine-tuning (\(O(N)\) parameters).

GPart is controlled by a single hyperparameter, the subspace dimension \(d\). Setting \(d=1\) collapses all parameters to one shared scalar, while \(d=N\) recovers full fine-tuning; intermediate values interpolate between these extremes. Unlike LoRA-based methods, which require choosing \(r\) and, in Uni-LoRA, also \(d\), GPart uses \(d\) alone to control the trade-off between parameter efficiency and expressiveness.
\section{Theoretical Analysis}
\label{sec:theory}

We analyze the geometry induced by GPart and contrast it with that of LoRA-based parameterizations. Our central observation is that GPart defines a linear isometric embedding from the trainable space into the full weight space, whereas LoRA-based methods map trainable parameters to weight updates through a bilinear transformation whose local geometry depends on the current parameter values. As a result, GPart preserves the geometry of optimization within its trainable subspace, while LoRA-based parameterizations generally do not.

\subsection{End-to-end isometry of GPart}

We begin with the basic geometric property of GPart.

\begin{proposition}[GPart isometry]
\label{prop:isometry}
Let \(P \in \mathbb{R}^{N \times d}\) be the partition matrix defined in Section~\ref{sec:method}, satisfying \(P^\top P = I_d\). Then for any \(\btheta, \btheta' \in \mathbb{R}^d\),
\begin{equation}
\|P\btheta - P\btheta'\|_2 = \|\btheta - \btheta'\|_2.
\label{eq:gpart-isometry}
\end{equation}
\end{proposition}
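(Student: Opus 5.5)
The plan is to reduce the statement to the identity \(P^\top P = I_d\) from Equation~\eqref{eq:ptp}, using only linearity of \(P\). Set \(\boldsymbol{\delta} = \btheta - \btheta' \in \mathbb{R}^d\). Since \(P\) is linear, \(P\btheta - P\btheta' = P\boldsymbol{\delta}\), so it suffices to show \(\|P\boldsymbol{\delta}\|_2 = \|\boldsymbol{\delta}\|_2\) for every \(\boldsymbol{\delta}\in\mathbb{R}^d\). Because both sides are nonnegative, it is equivalent to prove equality of the squared norms.

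For the squared norms I will write
\begin{equation*}
\|P\boldsymbol{\delta}\|_2^2 = (P\boldsymbol{\delta})^\top (P\boldsymbol{\delta}) = \boldsymbol{\delta}^\top P^\top P\, \boldsymbol{\delta} = \boldsymbol{\delta}^\top I_d\, \boldsymbol{\delta} = \|\boldsymbol{\delta}\|_2^2 .
\end{equation*}
The only substantive input here is \(P^\top P = I_d\), which the proposition takes as a hypothesis. Taking square roots then gives Equation~\eqref{eq:gpart-isometry}.

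The one step that needs any care is justifying \(P^\top P = I_d\) for the concrete partition matrix of Equation~\eqref{eq:partition-matrix}, should one not simply take it as given. I would verify it entrywise. The \((j,k)\) entry of \(P^\top P\) is \(\sum_i P_{ij}P_{ik}\).

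\textbf{Off-diagonal entries} (\(j\neq k\)). Each row \(i\) has a single nonzero entry, in column \(g(i)\), so \(P_{ij}P_{ik}=0\) for every \(i\). Hence these entries vanish.

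\textbf{Diagonal entries} (\(j=k\)). The sum runs over the \(n_j\) indices with \(g(i)=j\), each contributing \(1/n_j\), for a total of \(1\). This requires \(n_j \ge 1\), which is guaranteed because every group is nonempty.

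With this verified, the argument is complete; there is no genuine obstacle beyond the nonemptiness of the groups.

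As a coordinate cross-check, Equation~\eqref{eq:coordinate-update} gives
\begin{equation*}
\|P\boldsymbol{\delta}\|_2^2=\sum_{j=1}^d \sum_{i:\,g(i)=j}\frac{\delta_j^2}{n_j}=\sum_{j=1}^d \delta_j^2 ,
\end{equation*}
which is the same identity written out directly.
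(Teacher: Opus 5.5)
Your proof is correct and follows the paper's own argument: you reduce to $\btheta-\btheta'$ by linearity, expand the squared norm as a quadratic form in $P^\top P = I_d$, and take square roots. Your entrywise check of $P^\top P = I_d$ (disjoint column supports off the diagonal, $n_j \cdot (1/n_j) = 1$ on it, using that every group is nonempty) likewise matches the paper's appendix proof.
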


\begin{proof}
By linearity,
\[
\|P\btheta - P\btheta'\|_2^2
=
\|P(\btheta-\btheta')\|_2^2
=
(\btheta-\btheta')^\top P^\top P (\btheta-\btheta')
=
(\btheta-\btheta')^\top (\btheta-\btheta')
=
\|\btheta-\btheta'\|_2^2.
\]
Taking square roots gives the result.
\end{proof}

Proposition~\ref{prop:isometry} shows that GPart preserves Euclidean geometry exactly within the trainable subspace. Equivalently, the map \(\btheta \mapsto P\btheta\) preserves norms, distances, and inner products on \(\mathbb{R}^d\). Thus, optimization in \(\btheta\)-space is exactly optimization over the subspace \(\operatorname{image}(P) \subset \mathbb{R}^N\) expressed in orthonormal coordinates.





\subsection{LoRA induces a parameter-dependent metric}

We now contrast this with the LoRA parameterization. For a single layer, LoRA represents the weight update as
\begin{equation}
\phi(A,B) = BA,
\qquad
B \in \mathbb{R}^{m \times r}, \quad A \in \mathbb{R}^{r \times n}.
\label{eq:lora-map}
\end{equation}
Unlike GPart, this map is bilinear rather than linear. Its local behavior is therefore governed by a Jacobian that depends on the current values of \(A\) and \(B\).

Using \(\operatorname{vec}(XYZ) = (Z^\top \otimes X)\operatorname{vec}(Y)\), we obtain
\[
\operatorname{vec}(BA) = (I_n \otimes B)\operatorname{vec}(A)
\qquad\text{and}\qquad
\operatorname{vec}(BA) = (A^\top \otimes I_m)\operatorname{vec}(B).
\]
Hence,
\begin{equation}
\frac{\partial\, \operatorname{vec}(BA)}{\partial\, \operatorname{vec}(A)}
=
I_n \otimes B,
\qquad
\frac{\partial\, \operatorname{vec}(BA)}{\partial\, \operatorname{vec}(B)}
=
A^\top \otimes I_m.
\label{eq:lora-jacobian}
\end{equation}

These Jacobian blocks depend explicitly on \(A\) and \(B\). Consequently, the metric induced on the trainable coordinates by the map \((A,B)\mapsto BA\) varies with the current point in parameter space. In particular, there is no fixed orthonormal coordinate system in \((A,B)\)-space whose Euclidean geometry is preserved by the LoRA map throughout training. Equal-norm perturbations in trainable coordinates can therefore produce different update magnitudes depending on the current values of \(A\) and \(B\).




\subsection{Connection to intrinsic dimensionality}

The motivation for GPart is closely related to the intrinsic-dimensionality view of fine-tuning introduced by \citet{aghajanyan2021}. That perspective suggests that strong fine-tuning performance can often be recovered by optimizing within a random low-dimensional subspace of the full parameter space.

GPart follows this viewpoint directly by selecting a random \(d\)-dimensional subspace of the ambient weight space \(\mathbb{R}^N\) and optimizing within that subspace. Because the embedding \(P : \mathbb{R}^d \to \mathbb{R}^N\) is isometric, the restricted optimization problem is represented in orthonormal coordinates without additional distortion introduced by the parameterization.

The key distinction from Uni-LoRA is therefore the location of the random subspace. GPart chooses a subspace directly in full weight space, whereas Uni-LoRA chooses a subspace in LoRA parameter space and then maps it into weight space through a bilinear transformation. The former preserves the geometry of the restricted problem exactly, whereas the latter does not.
\section{Experiments}
\label{sec:experiments}

We evaluate GPart across encoder-only and decoder-only models on three benchmark families: natural language understanding, mathematical reasoning, and computer vision. We compare against both non-PEFT baselines—Linear Probing (LP), which updates only the task-specific head, and Full Fine-tuning (FF), which updates all model parameters—and standard PEFT baselines, including LoRA~\citep{hu2022lora}, BitFit~\citep{zaken2022bitfit}, VeRA~\citep{kopiczko2024vera}, FourierFT~\citep{gao2024fourierft} and Uni-LoRA~\citep{li2025unilora}. The reported trainable-parameter counts (\# Params) exclude the task-specific head and include only backbone parameters for FF and adapter parameters for PEFT methods. Detailed training hyperparameters for each setup are provided in Appendix~\ref{app:implementation-details}.

\subsection{Natural language understanding}

We evaluate GPart on GLUE~\citep{wang2019glue} using RoBERTa-base and RoBERTa-large~\citep{liu2019roberta}. In Table~\ref{tab:roberta-combined}, we report results on CoLA, SST-2, MRPC, STS-B, QNLI, and RTE, using Matthews correlation for CoLA, Pearson correlation for STS-B, and accuracy for the remaining tasks, following standard practice~\citep{wang2019glue,hu2022lora,li2025unilora}. For each task, we reserve a portion of each training set as a development set for checkpoint selection, and report the median and standard deviation across three random seeds over the validation set.
All models and adapters are fine-tuned by us under this train/dev/val evaluation protocol to ensure a consistent comparison across methods. Full training details are provided in Table~\ref{tab:impl-glue}.

\begin{table}[ht]
\caption{GLUE validation results for RoBERTa-base and RoBERTa-large. Bold indicates the best result among PEFT methods, underlined entries indicate the second-best result, and highlighted entries mark cases where GPart outperforms Uni-LoRA.}
\label{tab:roberta-combined}
\centering
\small
\resizebox{.9\textwidth}{!}{%
\begin{tabular}{l lc cccccc c}
\toprule
Model & Method & \# Params & CoLA & SST-2 & MRPC & STS-B & QNLI & RTE & Avg. \\
\midrule
\multirow{7}{*}{\rotatebox[origin=c]{90}{RoBERTa-Base}} & LP & 0 & $43.1_{\pm 2.6}$ & $84.3_{\pm 0.5}$ & $72.5_{\pm 0.5}$ & $66.3_{\pm 0.5}$ & $70.6_{\pm 0.1}$ & $59.2_{\pm 1.2}$ & 66.0 \\
 & FF & 124M & $60.6_{\pm 2.1}$ & $94.3_{\pm 0.5}$ & $87.8_{\pm 0.9}$ & $90.5_{\pm 0.1}$ & $92.2_{\pm 0.3}$ & $77.3_{\pm 0.5}$ & 83.8 \\
 \cmidrule(lr){2-10}
 & LoRA ($r{=}8$) & 294K & $60.5_{\pm 1.2}$ & $94.0_{\pm 0.6}$ & $87.8_{\pm 0.5}$ & $\textbf{90.8}_{\pm 0.2}$ & $\textbf{92.8}_{\pm 0.1}$ & $75.4_{\pm 1.0}$ & \underline{83.5} \\
 & BitFit & 102K & $58.5_{\pm 1.0}$ & $93.6_{\pm 0.2}$ & $\underline{88.0}_{\pm 1.2}$ & $90.2_{\pm 0.0}$ & $91.8_{\pm 0.1}$ & $\textbf{78.7}_{\pm 6.1}$ & \underline{83.5} \\
 & VeRA ($r{=}1024$) & 43K & $\textbf{60.8}_{\pm 0.7}$ & $94.0_{\pm 0.2}$ & $87.5_{\pm 0.2}$ & $90.2_{\pm 0.1}$ & $91.9_{\pm 0.0}$ & $76.2_{\pm 2.9}$ & 83.4 \\
 & Uni-LoRA & 23K & $58.1_{\pm 0.0}$ & $\underline{94.2}_{\pm 0.5}$ & $86.5_{\pm 0.8}$ & $90.3_{\pm 0.1}$ & $\underline{92.0}_{\pm 0.3}$ & $76.9_{\pm 1.7}$ & 83.0 \\
 & GPart \textit{(Ours)} & 23K
    & \cellcolor{yellow!25}{$\underline{60.6}_{\pm 1.9}$}
    & \cellcolor{yellow!25}{$\textbf{94.3}_{\pm 0.5}$}
    & \cellcolor{yellow!25}{$\textbf{88.5}_{\pm 0.6}$}
    & \cellcolor{yellow!25}{$\underline{90.4}_{\pm 0.1}$}
    & {$91.1_{\pm 0.1}$}
    & \cellcolor{yellow!25}{$\underline{77.3}_{\pm 0.0}$}
    & \cellcolor{yellow!25}{\textbf{83.7}} \\
\midrule
\multirow{2}{*}{\rotatebox[origin=c]{90}{RoBERTa-Large}} & LP & 0 & $44.9_{\pm 1.1}$ & $86.8_{\pm 0.6}$ & $72.3_{\pm 1.1}$ & $58.8_{\pm 0.3}$ & $66.8_{\pm 0.7}$ & $58.8_{\pm 0.6}$ & 64.7 \\
 & FF & 355M & $66.3_{\pm 0.5}$ & $95.8_{\pm 0.3}$ & $89.5_{\pm 0.2}$ & $92.0_{\pm 0.4}$ & $94.6_{\pm 0.2}$ & $83.4_{\pm 1.2}$ & 86.9 \\
 \cmidrule(lr){2-10}
 & LoRA ($r{=}8$) & 786K & $\underline{65.3}_{\pm 1.6}$ & $\underline{95.6}_{\pm 0.1}$ & $\underline{88.0}_{\pm 0.6}$ & $91.3_{\pm 0.2}$ & $\textbf{94.8}_{\pm 0.2}$ & $\underline{85.4}_{\pm 0.2}$ & \textbf{86.7} \\
 & BitFit & 271K & $\textbf{65.4}_{\pm 0.6}$ & $\textbf{95.9}_{\pm 0.2}$ & $87.8_{\pm 1.2}$ & $89.9_{\pm 0.4}$ & $94.0_{\pm 0.2}$ & $82.3_{\pm 12.5}$ & 85.9 \\
 & VeRA ($r{=}256$) & 61K & $59.1_{\pm 3.6}$ & $\textbf{95.9}_{\pm 0.2}$ & $87.8_{\pm 0.5}$ & $91.0_{\pm 0.2}$ & $\underline{94.1}_{\pm 0.5}$ & $\textbf{87.2}_{\pm 0.9}$ & 85.8 \\
 & Uni-LoRA & 23K & $65.0_{\pm 7.3}$ & $95.4_{\pm 0.1}$ & $\textbf{88.7}_{\pm 0.5}$ & $\underline{91.5}_{\pm 0.4}$ & $92.9_{\pm 1.0}$ & $81.8_{\pm 3.1}$ & 85.9 \\
 & GPart \textit{(Ours)} & 23K
    & {$64.2_{\pm 0.1}$}
    & {$95.4_{\pm 0.2}$}
    & {$87.2_{\pm 0.8}$}
    & \cellcolor{yellow!25}{$\textbf{91.7}_{\pm 0.2}$}
    & \cellcolor{yellow!25}{$94.0_{\pm 0.2}$}
    & \cellcolor{yellow!25}{$85.2_{\pm 0.9}$}
    & \cellcolor{yellow!25}{\underline{86.3}} \\
\bottomrule
\end{tabular}%
}
\end{table}

We compare against LP, FF, BitFit, LoRA, VeRA, and Uni-LoRA. GPart and Uni-LoRA are matched exactly by subspace dimension $d$; LoRA and VeRA are reported at the closest achievable budget according to their original hyper-parameterizations.

GPart performs strongly at very small parameter budgets. On RoBERTa-base, it achieves the best average among all parameter-efficient methods, outperforming not only Uni-LoRA under the matched budget but also LoRA and VeRA configurations that use substantially more trainable parameters. On RoBERTa-large, GPart again improves over Uni-LoRA on average, though the margin is smaller and some individual tasks favor alternative baselines.

\subsection{Mathematical reasoning}

We evaluate GPart on mathematical reasoning using a diverse set of pretrained decoder-only, non-reasoning models spanning multiple scales and architectures: Qwen-2.5-0.5B, Qwen2.5-3B, and Qwen-2.5-7B~\citep{qwen2.5}, Gemma-7B~\citep{gemma_2024}, and Llama-3.1-8B~\citep{grattafiori2024llama}. Following the MetaMath setup~\citep{yu2023metamath}, we fine-tune on MetaMathQA and evaluate on GSM8K~\citep{cobbe2021gsm8k} and MATH~\citep{hendrycksmath2021}, reporting exact-match accuracy on the final answer. We focus on comparison against Uni-LoRA under exactly matched trainable parameter budgets.

\begin{table}[ht]
\caption{Mathematical reasoning results after fine-tuning on MetaMathQA and evaluating on GSM8K and MATH. We report test accuracy. Bold indicates the best result within each model.}
\label{tab:math-reasoning}
\centering
\small
\resizebox{.6\textwidth}{!}{%
\begin{tabular}{l l c c c}
\toprule
Model & Adapter & \# Params & GSM8K & MATH \\
\midrule
\multirow{2}{*}{Qwen-2.5-0.5B}
& Uni-LoRA & 131K & 46.02 & 20.94 \\
& GPart \textit{(Ours)} & 131K & \textbf{46.70} & \textbf{21.16} \\

\midrule
\multirow{2}{*}{Llama-3.1-8B}
& Uni-LoRA & 524K & 68.16 & \textbf{22.90} \\
& GPart \textit{(Ours)} & 524K & \textbf{69.45} & 22.36 \\

\midrule
\multirow{2}{*}{Gemma-7B}
& Uni-LoRA & 524K & \textbf{72.25} & \textbf{25.76} \\
& GPart \textit{(Ours)} & 524K & 71.42 & 24.80 \\

\midrule
\multirow{2}{*}{Qwen-2.5-3B}
& Uni-LoRA & 524K & 78.24 & 40.98 \\
& GPart \textit{(Ours)} & 524K & \textbf{79.30} & \textbf{41.40} \\

\midrule
\multirow{2}{*}{Qwen-2.5-7B}
& Uni-LoRA & 524K & \textbf{81.65} & 47.20 \\
& GPart \textit{(Ours)} & 524K & 81.43 & \textbf{50.42} \\

\midrule
\multirow{2}{*}{Average} & Uni-LoRA & // & 69.26 & 31.56 \\
 & GPart \textit{(Ours)} & // & \textbf{69.66} & \textbf{32.03} \\
\bottomrule
\end{tabular}%
}
\end{table}

The results in Table~\ref{tab:math-reasoning} show that GPart remains competitive with Uni-LoRA on both benchmarks across the full set of architectures. Averaged over models, it slightly outperforms Uni-LoRA, increasing mean accuracy from 69.26 to 69.66 on GSM8K and from 31.56 to 32.03 on MATH. Overall, these results indicate that removing the low-rank bottleneck does not harm decoder-only adaptation, although the gains are less consistent than in the encoder-only and vision settings. Taken together, the average results reinforce that GPart is a competitive alternative to Uni-LoRA under matched parameter budgets.

\subsection{Computer vision tasks}

\begin{table}[t]
\caption{Comparison with baseline approaches across eight computer vision datasets using ViT-Base and ViT-Large backbones. Results for LP, FF, FourierFT, and Uni-LoRA are taken from the Uni-LoRA paper~\citep{li2025unilora}. Bold indicates the best result among PEFT methods, and highlighted entries mark cases where GPart outperforms Uni-LoRA.}
\label{tab:vision_tasks}
\centering
\small
\resizebox{\textwidth}{!}{%
\begin{tabular}{llcccccccccc}
\toprule
Model & Method & \# Params & OxfordPets & StanfordCars & CIFAR10 & DTD & EuroSAT & FGVC & RESISC45 & CIFAR100 & Avg. \\
\midrule

\multirow{5}{*}{\rotatebox[origin=c]{90}{ViT-Base}}
& LP         & 0     & 90.28$_{\pm 0.43}$ & 25.76$_{\pm 0.28}$ & 96.41$_{\pm 0.02}$ & 69.77$_{\pm 0.67}$ & 88.72$_{\pm 0.13}$ & 17.44$_{\pm 0.43}$ & 74.22$_{\pm 0.10}$ & 84.28$_{\pm 0.11}$ & 68.36 \\
& FF         & 85.8M & 93.14$_{\pm 0.40}$ & 79.78$_{\pm 1.15}$ & 98.92$_{\pm 0.05}$ & 77.68$_{\pm 1.21}$ & 99.05$_{\pm 0.09}$ & 54.84$_{\pm 1.23}$ & 96.13$_{\pm 0.13}$ & 92.38$_{\pm 0.13}$ & 86.49 \\
\cmidrule(lr){2-12}
& FourierFT  & 72K   & 93.21$_{\pm 0.26}$ & 46.11$_{\pm 0.24}$ & 98.58$_{\pm 0.07}$ & 75.09$_{\pm 0.37}$ & 98.29$_{\pm 0.04}$ & 27.51$_{\pm 0.64}$ & 91.97$_{\pm 0.31}$ & 91.20$_{\pm 0.14}$ & 77.75 \\
& FourierFT  & 239K  & 93.05$_{\pm 0.34}$ & 56.36$_{\pm 0.66}$ & 98.69$_{\pm 0.06}$ & 77.30$_{\pm 0.61}$ & 98.78$_{\pm 0.11}$ & 32.44$_{\pm 0.99}$ & 94.26$_{\pm 0.20}$ & 91.45$_{\pm 0.18}$ & 80.29 \\
& Uni-LoRA   & 72K   & \textbf{94.00}$_{\pm 0.13}$ & 76.06$_{\pm 0.23}$ & \textbf{98.77}$_{\pm 0.03}$ & 76.99$_{\pm 0.96}$ & 98.86$_{\pm 0.10}$ & 50.36$_{\pm 0.63}$ & 94.08$_{\pm 0.19}$ & 92.10$_{\pm 0.25}$ & 85.15 \\
& GPart \textit{(Ours)} & 72K
& 93.85$_{\pm 0.14}$
& \cellcolor{yellow!25}\textbf{77.12}$_{\pm 0.13}$
& \cellcolor{yellow!25}\textbf{98.77}$_{\pm 0.05}$
& \cellcolor{yellow!25}\textbf{77.52}$_{\pm 1.59}$
& \cellcolor{yellow!25}\textbf{99.00}$_{\pm 0.10}$
& \cellcolor{yellow!25}\textbf{56.84}$_{\pm 0.17}$
& \cellcolor{yellow!25}\textbf{94.29}$_{\pm 0.22}$
& \cellcolor{yellow!25}\textbf{92.11}$_{\pm 0.11}$
& \cellcolor{yellow!25}\textbf{86.19} \\
\midrule

\multirow{5}{*}{\rotatebox[origin=c]{90}{ViT-Large}}
& LP         & 0      & 91.11$_{\pm 0.30}$ & 37.91$_{\pm 0.27}$ & 97.78$_{\pm 0.04}$ & 73.33$_{\pm 0.26}$ & 92.64$_{\pm 0.08}$ & 24.62$_{\pm 0.24}$ & 82.02$_{\pm 0.11}$ & 84.28$_{\pm 0.11}$ & 72.96 \\
& FF         & 303.3M & 94.43$_{\pm 0.56}$ & 88.90$_{\pm 0.26}$ & 99.15$_{\pm 0.04}$ & 81.79$_{\pm 1.01}$ & 99.04$_{\pm 0.08}$ & 68.25$_{\pm 1.63}$ & 96.43$_{\pm 0.07}$ & 93.58$_{\pm 0.19}$ & 90.20 \\
\cmidrule(lr){2-12}
& FourierFT  & 144K   & 94.46$_{\pm 0.28}$ & 69.56$_{\pm 0.30}$ & 99.10$_{\pm 0.04}$ & 80.83$_{\pm 0.43}$ & 98.65$_{\pm 0.09}$ & 39.92$_{\pm 0.68}$ & 93.86$_{\pm 0.14}$ & 93.31$_{\pm 0.09}$ & 83.71 \\
& FourierFT  & 480K   & \textbf{94.84}$_{\pm 0.05}$ & 79.14$_{\pm 0.67}$ & 99.08$_{\pm 0.05}$ & \textbf{81.88}$_{\pm 0.50}$ & 98.66$_{\pm 0.03}$ & 51.28$_{\pm 0.66}$ & 95.20$_{\pm 0.07}$ & \textbf{93.37}$_{\pm 0.11}$ & 86.68 \\
& Uni-LoRA   & 144K   & 94.65$_{\pm 0.23}$ & 83.16$_{\pm 0.62}$ & 98.77$_{\pm 0.03}$ & 81.35$_{\pm 0.48}$ & 98.89$_{\pm 0.07}$ & 58.89$_{\pm 0.62}$ & \textbf{95.24}$_{\pm 0.12}$ & 93.08$_{\pm 0.11}$ & 88.00 \\
& GPart \textit{(Ours)}   & 144K
& 93.86$_{\pm 0.36}$
& \cellcolor{yellow!25}\textbf{85.20}$_{\pm 0.29}$
& \cellcolor{yellow!25}\textbf{99.11}$_{\pm 0.03}$
& 78.74$_{\pm 1.08}$
& \cellcolor{yellow!25}\textbf{99.08}$_{\pm 0.02}$
& \cellcolor{yellow!25}\textbf{61.48}$_{\pm 0.41}$
& 95.09$_{\pm 0.19}$
& 92.59$_{\pm 0.15}$
& \cellcolor{yellow!25}\textbf{88.14} \\
\bottomrule
\end{tabular}%
}
\end{table}

Finally, we evaluate GPart on eight computer vision benchmarks to test whether the method transfers beyond language tasks. We follow the protocol used in FourierFT~\citep{gao2024fourierft} and later adopted in Uni-LoRA~\citep{li2025unilora}, using ViT-Base and ViT-Large \citep{dosovitskiy2020vit} pretrained backbones. To align with prior work, we use matched adapter budgets of 72K trainable parameters for ViT-Base and 144K for ViT-Large. For each experiment we report mean $\pm$ standard deviation over five random seeds. Additional implementation details can be found in Table \ref{tab:impl-vision}. Table~\ref{tab:vision_tasks} shows that GPart achieves the strongest average among the parameter-efficient methods on both ViT-Base and ViT-Large, improving over Uni-LoRA and approaching the performance of full fine-tuning.

\subsection{Additional experiments}

We complement the main benchmark results with two experiments examining key design choices in GPart: the sensitivity to the subspace dimension \(d\) and the geometry of the induced optimization landscape.

\subsubsection{Effect of subspace dimension $d$}

We analyze the sensitivity of GPart to the subspace dimension \(d\) by fine-tuning RoBERTa-Large on SST-2 while varying \(d\) and keeping all other hyperparameters fixed. As shown in Figure~\ref{fig:dim-curve-sst2}, performance increases rapidly as \(d\) grows from very small values, then plateaus in the mid-range before slightly declining at large \(d\), consistent with mild overfitting as the parameter budget increases. Practically, \(d\) serves as a continuous knob that trades parameter efficiency for model capacity.


\subsubsection{Loss landscape geometry}
\label{sec:ablation-landscape}

To complement the theoretical analysis of Section~\ref{sec:theory}, we visualize the optimization geometry of GPart and Uni-LoRA using the filter-normalized random-direction method of \citet{visualloss}. For each method, we evaluate the validation loss on a \(30 \times 30\) grid of perturbations \(\btheta^* + \alpha \delta_1 + \beta \delta_2\), with \(\alpha, \beta \in [-0.5, 0.5]\), where \(\delta_1, \delta_2 \in \mathbb{R}^d\) are random directions normalized to have the same \(\ell_2\) norm as \(\btheta^*\). Perturbations are confined to the adapter subspace and averaged over three direction seeds.

Figure~\ref{fig:loss_landscape} shows the resulting surfaces on SST-2 with RoBERTa-Large at \(d = 23040\). GPart yields a smooth, well-centered basin with gradually rising contours in all directions, consistent with its isometric parameterization preserving Euclidean structure uniformly across the trainable subspace. Uni-LoRA, by contrast, develops sharp high-loss regions in opposing corners, a signature of the bilinear reconstruction step \((A,B)\mapsto BA\): equal steps in \(\btheta_d\)-space can produce direction-dependent weight-space updates, causing the loss to rise steeply along some directions while remaining flat along others. This asymmetry is stable across all three seeds, suggesting that it reflects a structural property of the parameterization rather than a sampling artifact.


\begin{figure}[t]
    \centering
    \begin{minipage}[t]{0.38\textwidth}
        \centering
        \includegraphics[width=\textwidth]{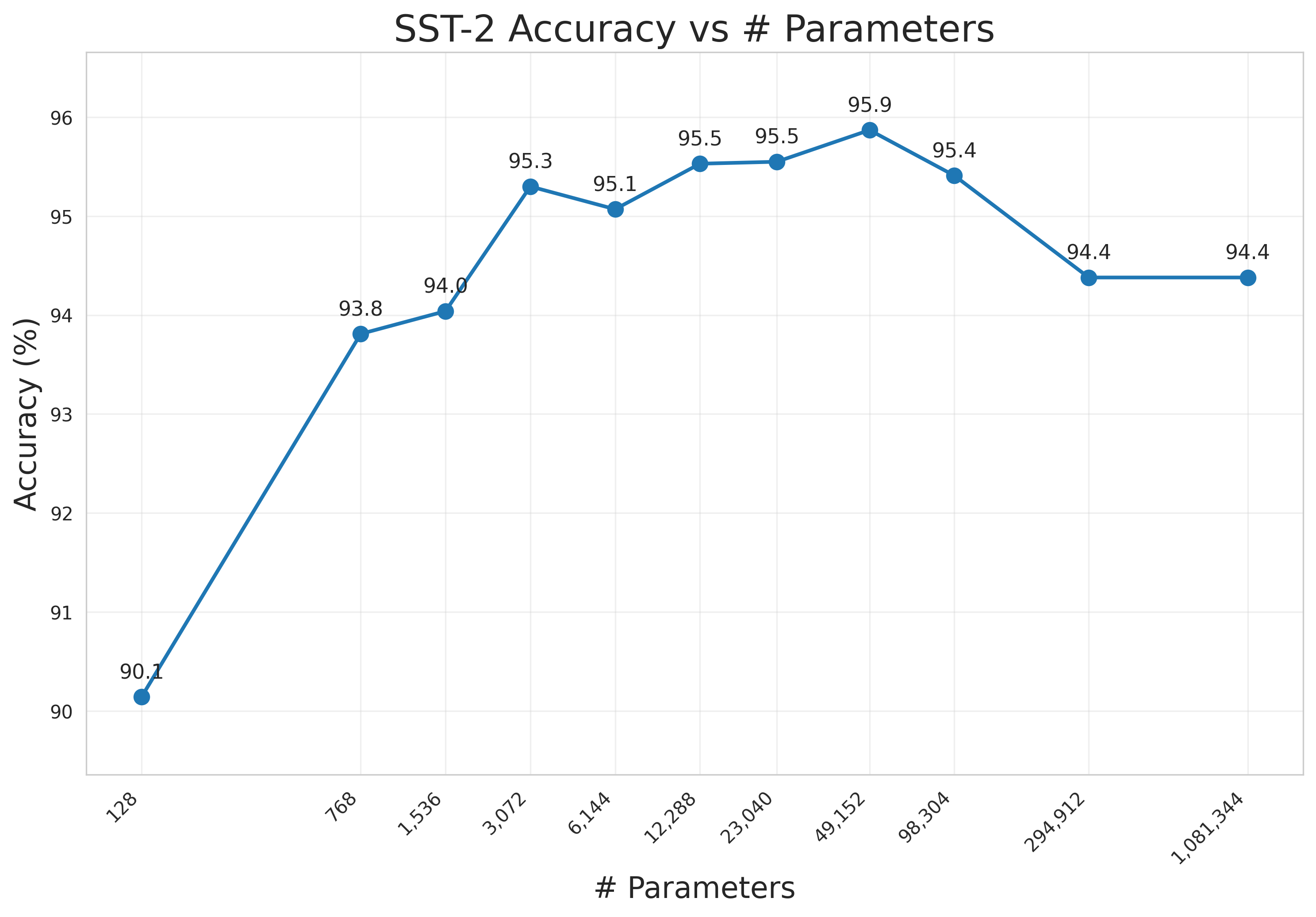}
        \caption{Accuracy on SST-2 with RoBERTa-Large as a function of the subspace dimension \(d\).}
        \label{fig:dim-curve-sst2}
    \end{minipage}
    \hfill
    \begin{minipage}[t]{0.58\textwidth}
        \centering
        \includegraphics[width=\textwidth]{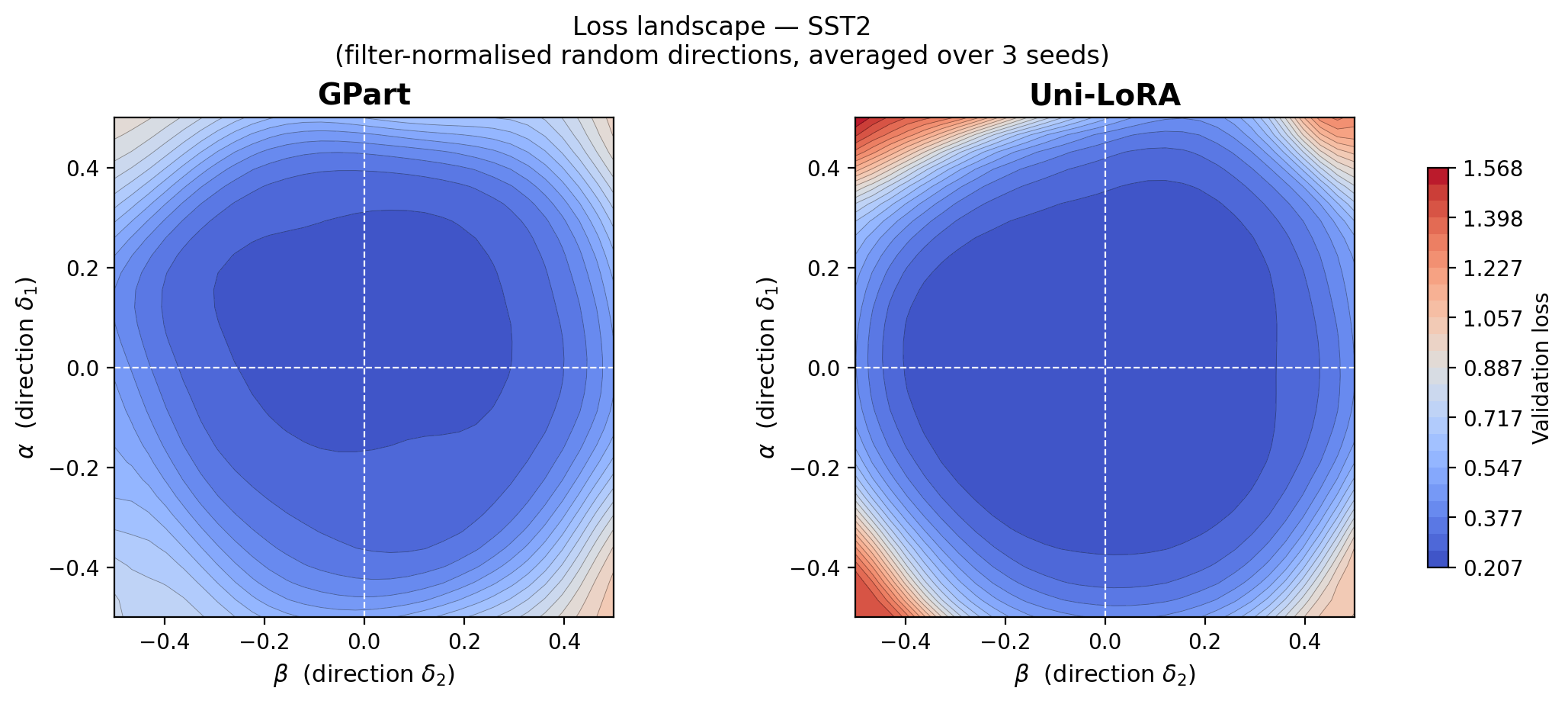}
        \caption{Loss landscape around the converged solution for GPart (left) and Uni-LoRA (right) on SST-2 with RoBERTa-Large and 23K trainable parameters, averaged over three random direction seeds~\citep{visualloss}.}
        \label{fig:loss_landscape}
    \end{minipage}
\end{figure}
\section{Limitations}
\label{sec:limitations}

While our empirical evaluation is broad and diverse across encoder-only language models, vision encoders, and decoder-only LLMs, in the future works, we would apply GPart to larger LMs and multimodal LMs. Our decoder-only experiments are limited to a relatively narrow set of models and reasoning benchmarks, so the extent to which the observed trends generalize to broader instruction-following or long-context settings remains unclear.
Regarding broader impacts, this work is a methodology-focused contribution to PEFT and does not introduce new application-specific capabilities. We therefore do not identify significant additional societal impacts beyond those of the underlying pretrained models.
\section{Conclusion}
\label{sec:conclusion}

We introduced GPart, a parameter-efficient fine-tuning method that maps a low-dimensional trainable vector directly into the full model weight space through a random partition matrix. By removing the intermediate low-rank parameterization used by LoRA and its variants, GPart yields an end-to-end isometric parameterization of the optimized subspace while retaining a single hyperparameter, \(d\), to control the trainable budget. Across natural language understanding, mathematical reasoning, and computer vision benchmarks, GPart outperforms or is comparable to existing PEFT methods, while conceptually and practically straightforward to implement. Broadly, these results suggest that effective PEFT does not require a low-rank bottleneck, and that direct random subspace parameterizations constitute a promising alternative from both empirical and theoretical perspectives.

\begin{ack}
We thank Paweł Olszowiec, Maciej Żelaszczyk, and the members of the Agentic Reasoning Lab at Samsung AI Center Warsaw for insightful discussions and feedback that helped shape this work. We are also grateful to Ilona Harhasevich for her assistance in designing the figures presented in this paper.
\end{ack}

\bibliographystyle{plainnat}
\bibliography{references}

\newpage
\appendix

\section{Formal Analysis of GPart's Isometric Structure}
\label{app:proofs}

\subsection{Partition matrix construction}

We provide a self-contained construction of the partition matrix and proof of its isometry property.

\begin{definition}[Partition matrix]
Given $N$ parameters, $d$ groups, and a random assignment function $g: \{1,\ldots,N\} \to \{1,\ldots,d\}$, the partition matrix $P \in \R^{N \times d}$ is defined by:
\begin{equation}
P_{ij} = \begin{cases} 1/\sqrt{n_j} & \text{if } g(i) = j, \\ 0 & \text{otherwise}, \end{cases}
\end{equation}
where $n_j = |\{i : g(i) = j\}|$.
\end{definition}

\begin{proposition}
$P^\top P = I_d$.
\end{proposition}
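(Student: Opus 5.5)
I will compute the Gram matrix entrywise. For $j,k \in \{1,\ldots,d\}$ we have $(P^\top P)_{jk} = \sum_{i=1}^N P_{ij} P_{ik}$. By the definition, $P_{ij} \neq 0$ exactly when $g(i) = j$. Each row therefore has at most one nonzero entry, namely in column $g(i)$, because $g$ is a function. I will then split into the off-diagonal case $j \neq k$ and the diagonal case $j = k$.

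For $j \neq k$, a product $P_{ij}P_{ik}$ can be nonzero only if $g(i) = j$ and $g(i) = k$ hold at the same time. That is impossible, so every summand vanishes and $(P^\top P)_{jk} = 0$. In other words, the columns have disjoint supports.

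For $j = k$, only the indices in the group $G_j = \{i : g(i) = j\}$ contribute. Each such index contributes $(1/\sqrt{n_j})^2 = 1/n_j$. Since $|G_j| = n_j$, this gives
\[
(P^\top P)_{jj} = \sum_{i \in G_j} \frac{1}{n_j} = \frac{n_j}{n_j} = 1.
\]
Together with the off-diagonal case, this shows $P^\top P = I_d$.

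The only real obstacle is the well-definedness of this last step. It needs $n_j \ge 1$ for every $j$, both so that $1/\sqrt{n_j}$ makes sense and so that the diagonal entry equals $1$ rather than an empty sum equal to $0$. The appendix Definition speaks only of a ``random assignment function'' $g$ and does not state that every group is nonempty. I will therefore invoke the construction in Section~\ref{sec:method}, where $g$ is obtained by permuting the $N$ indices and splitting them into $d$ disjoint groups, each of which is nonempty; this requires $d \le N$. I will state this surjectivity of $g$ explicitly as a hypothesis of the proposition. If $g$ were allowed to miss a group, the statement would fail, since that column of $P$ would be zero.
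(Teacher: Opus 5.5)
Your proof is correct and is the same entrywise argument the paper gives: disjoint column supports force the off-diagonal entries to vanish, and the diagonal entry is $n_j \cdot (1/n_j) = 1$. Making the nonemptiness of every group ($n_j \ge 1$, hence $d \le N$) an explicit hypothesis is a sound addition. The paper's appendix proof uses it without saying so, relying on the Section~\ref{sec:method} construction rather than on the appendix definition.
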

\begin{proof}
$(P^\top P)_{jk} = \sum_{i=1}^N P_{ij} P_{ik}$. If $j \neq k$, the supports of columns $j$ and $k$ are disjoint (each row has a single non-zero entry), so the sum is $0$. If $j = k$, the sum is $\sum_{i: g(i)=j} (1/\sqrt{n_j})^2 = n_j \cdot (1/n_j) = 1$.
\end{proof}

\subsection{Gradient identity}
\begin{proposition}
For any loss function $\mathcal{L}(w)$ with $w = w_0 + P\btheta_d$:
\begin{equation}
\nabla_{\btheta_d} \mathcal{L} = P^\top \nabla_w \mathcal{L}.
\end{equation}
Moreover, $\|\nabla_{\btheta_d} \mathcal{L}\|_2 \leq \|\nabla_w \mathcal{L}\|_2$,
with equality when $\nabla_w \mathcal{L}$ lies entirely in the image of $P$.
\end{proposition}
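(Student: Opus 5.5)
The plan is to prove the gradient identity by the chain rule, and the norm inequality by noting that $PP^\top$ is an orthogonal projection, using $P^\top P = I_d$ from the preceding proposition.

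\emph{Gradient identity.} Define $h(\btheta_d) = \mathcal{L}(w_0 + P\btheta_d)$, and assume $\mathcal{L}$ is differentiable at $w$. The map $\btheta_d \mapsto w_0 + P\btheta_d$ is affine with constant Jacobian $P$. The chain rule then gives
\[
\nabla_{\btheta_d} h = \Big(\tfrac{\partial w}{\partial \btheta_d}\Big)^{\!\top} \nabla_w \mathcal{L} = P^\top \nabla_w \mathcal{L}.
\]
In coordinates this is exactly Equation~\eqref{eq:group-gradient}, because column $j$ of $P$ equals $1/\sqrt{n_j}$ on the support of group $j$ and zero elsewhere.

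\emph{Norm inequality.} Write $v = \nabla_w \mathcal{L}$ and set $\Pi = PP^\top \in \R^{N\times N}$. From $P^\top P = I_d$ we get
\[
\Pi^2 = P(P^\top P)P^\top = \Pi \quad\text{and}\quad \Pi^\top = \Pi,
\]
so $\Pi$ is the orthogonal projector onto $\operatorname{image}(P)$. Next,
\[
\|P^\top v\|_2^2 = v^\top P P^\top v = v^\top \Pi v = \|\Pi v\|_2^2,
\]
where the last step uses $\Pi^\top \Pi = \Pi$. Pythagoras gives $\|v\|_2^2 = \|\Pi v\|_2^2 + \|(I-\Pi)v\|_2^2$, and hence $\|P^\top v\|_2 \le \|v\|_2$.

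\emph{Equality case.} Equality holds exactly when $(I-\Pi)v = 0$, that is, when $v \in \operatorname{image}(P)$. Concretely, $v$ must be constant within each group. An equivalent check: if $v = Pu$, then $P^\top v = u$, and Proposition~\ref{prop:isometry} gives $\|v\|_2 = \|u\|_2$. The statement only asks for the "when" direction, but I will note that the converse also holds.

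The main obstacle is negligible. The only point needing care is recognizing that $\|P^\top v\|_2 = \|PP^\top v\|_2$; this follows from the isometry of $P$ applied to $u = P^\top v$, and it is what turns the inequality into a projection argument.
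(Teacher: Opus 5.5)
Your proposal is correct and follows the paper's argument: the chain rule through the affine map $\btheta_d \mapsto w_0 + P\btheta_d$ gives $\nabla_{\btheta_d}\mathcal{L} = P^\top \nabla_w \mathcal{L}$, and the bound comes from $\|P^\top v\|_2^2 = v^\top PP^\top v$, where $PP^\top$ is the orthogonal projector onto $\operatorname{image}(P)$. The only differences are minor: you finish with Pythagoras where the paper cites the $\{0,1\}$ eigenvalues of $PP^\top$, and your treatment of the equality case, including the converse, is more explicit than the paper's.
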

\begin{proof}
The first identity follows from the chain rule. For the norm bound:
$\|\nabla_{\btheta_d} \mathcal{L}\|^2
= \|P^\top \nabla_w \mathcal{L}\|^2
= (\nabla_w \mathcal{L})^\top P P^\top (\nabla_w \mathcal{L})$.
Since $PP^\top$ is an orthogonal projection onto $\mathrm{image}(P)$,
its eigenvalues are $0$ and $1$, giving the inequality.
\end{proof}


\subsection{Weight decay and regularization}\label{weight_decay}

End-to-end isometry also gives GPart a simple interpretation under L2 regularization. If weight decay is applied to \(\btheta_d\), then
\begin{equation}
\lambda \|\btheta_d\|_2^2
=
\lambda \|P\btheta_d\|_2^2
=
\lambda \|\Delta w\|_2^2,
\label{eq:gpart-weight-decay}
\end{equation}
where \(\Delta w = P\btheta_d = w - w_0\). Thus, weight decay on the trainable parameters is exactly weight decay on the induced perturbation in full weight space.

For LoRA, regularization is typically applied to the factors \(A\) and \(B\), yielding the penalty
\[
\lambda \big(\|A\|_F^2 + \|B\|_F^2\big).
\]
This does not directly equal the squared norm of the induced weight perturbation \(\Delta W = BA\). Instead, by submultiplicativity of the Frobenius norm and the arithmetic--geometric mean inequality,
\begin{equation}
\|\Delta W\|_F
=
\|BA\|_F
\leq
\|B\|_F \|A\|_F
\leq
\frac{1}{2}\big(\|A\|_F^2 + \|B\|_F^2\big).
\label{eq:lora-regularization-bound}
\end{equation}
Thus, regularization in LoRA controls only an upper bound on the norm of the resulting weight update, rather than the norm itself.

\begin{remark}
For Uni-LoRA, applying weight decay to the trainable vector yields an exact norm interpretation in the intermediate LoRA parameter space, since the projection into that space is isometric. However, after the subsequent bilinear map into weight space, the same mismatch as in LoRA remains.
\end{remark}
\section{The Cost of Breaking Isometry}
\label{app:noniso-theory}

We study the role of the \(1/\sqrt{n_j}\) normalization in GPart by comparing the standard isometric construction with a non-isometric variant in which the normalization is removed. This comparison isolates whether end-to-end isometry is merely a geometric convenience or whether it has a measurable effect on optimization and downstream performance.

\subsection{Setup}

We compare two variants of GPart that differ only in the column normalization of \(P\):
\begin{itemize}
    \item \textbf{GPart (isometric):} \(P_{ij} = 1/\sqrt{n_j}\) if \(g(i)=j\), so \(P^\top P = I_d\).
    \item \textbf{GPart (non-isometric):} \(P_{ij} = 1\) if \(g(i)=j\), so \(P^\top P = \mathrm{diag}(n_1,\ldots,n_d)\).
\end{itemize}
All other aspects of the method are identical.

\subsection{Weight decay miscalibration}

The performance gap between the two variants can be understood as a direct consequence of Equation~\eqref{eq:gpart-weight-decay}. In the isometric case, we have
\[
\|\Delta w\|_2^2 = \|P_{\mathrm{iso}} \btheta_d\|_2^2 = \|\btheta_d\|_2^2,
\]
so AdamW weight decay on \(\btheta_d\) with coefficient \(\lambda\) corresponds exactly to \(L_2\) regularization on the induced weight perturbation.

In the non-isometric case, the perturbation associated with group \(j\) has magnitude \(\sqrt{n_j}\,\theta_j\), so
\[
\|\Delta w\|_2^2 = \sum_j n_j \theta_j^2.
\]
AdamW still applies decay directly to \(\theta_j\), so the decay acts at scale \(\|\btheta_d\|_2^2\), while the actual weight perturbation lives at scale \(\sum_j n_j \theta_j^2\). Since \(n_j \approx N/d\) under a roughly balanced random partition, the regularization is miscalibrated by a factor of approximately \(N/d\). For typical settings such as \(N \approx 10^8\) and \(d \approx 10^4\), this factor is on the order of \(10^4\).

This mismatch is not corrected by Adam’s gradient normalization. Equivalently, the non-isometric parameterization behaves like the isometric variant with the induced weight perturbation scaled up by \(\sqrt{n_j}\) while the corresponding regularization strength is scaled down by \(n_j\). In practice, this yields severe under-regularization in weight space.

\subsection{Empirical effect of non-isometric \texorpdfstring{$P$}{P}}
\label{sec:ablation-noniso}

To test whether this theoretical mismatch has practical consequences, we compare GPart against the non-isometric variant on GLUE using RoBERTa-base and RoBERTa-large at a matched budget of 23K trainable parameters. For fairness, we independently tuned the optimization hyperparameters of the non-isometric variant so that any performance difference in performance cannot be attributed to a mismatched optimization scale. The results are reported in Table~\ref{tab:non-isometric}.

\begin{table}[ht]
\caption{Isometric vs.\ non-isometric GPart on GLUE.}
\label{tab:non-isometric}
\centering
\small
\resizebox{.8\textwidth}{!}{%
\begin{tabular}{l lc cccccc c}
\toprule
Model & Method & \# Params & CoLA & SST-2 & MRPC & STS-B & QNLI & RTE & Avg \\
\midrule
\multirow{2}{*}{Base}
& GPart Non-Iso & 23K & $44.7_{\pm 3.4}$ & $93.5_{\pm 0.5}$ & $87.5_{\pm 0.9}$ & $90.3_{\pm 0.2}$ & $\textbf{91.2}_{\pm 0.1}$ & $77.3_{\pm 2.8}$ & 80.7 \\
& GPart & 23K & $\textbf{60.6}_{\pm 1.9}$ & $\textbf{94.3}_{\pm 0.5}$ & $\textbf{88.5}_{\pm 0.6}$ & $\textbf{90.4}_{\pm 0.1}$ & $91.1_{\pm 0.1}$ & $\textbf{77.3}_{\pm 0.0}$ & \textbf{83.7} \\
\midrule
\multirow{2}{*}{Large}
 & GPart Non-Iso & 23K & $60.1_{\pm 0.6}$ & $95.4_{\pm 0.3}$ & $\textbf{89.0}_{\pm 1.6}$ & $89.9_{\pm 2.3}$ & $93.8_{\pm 0.1}$ & $80.5_{\pm 2.2}$ & 84.8 \\
 & GPart & 23K & $\textbf{64.2}_{\pm 0.1}$ & $\textbf{95.4}_{\pm 0.2}$ & $87.2_{\pm 0.8}$ & $\textbf{91.7}_{\pm 0.2}$ & $\textbf{94.0}_{\pm 0.2}$ & $\textbf{85.2}_{\pm 0.9}$ & \textbf{86.3} \\
\bottomrule
\end{tabular}%
}
\end{table}

Across both RoBERTa-base and RoBERTa-large, the isometric variant achieves a higher average score than the non-isometric variant. The gap is especially pronounced on CoLA for the base model (\(60.6\) vs.\ \(44.7\)) and on RTE for the large model (\(85.2\) vs.\ \(80.5\)), indicating that the normalization affects not only average performance but also stability on more sensitive tasks.

These results are consistent with the analysis above. Removing the \(1/\sqrt{n_j}\) factor does not merely alter the parameterization algebraically; it changes the relationship between parameter-space regularization and the actual magnitude of the induced weight update. The empirical degradation in Table~\ref{tab:non-isometric} therefore supports the view that the isometric normalization is a practically important part of GPart rather than a cosmetic design choice.
\section{Algorithm}
\label{app:algorithm}

\begin{algorithm}[ht]
\caption{GPart: Global Partition Fine-Tuning}
\label{alg:gpart}
\begin{algorithmic}[1]
\REQUIRE Pretrained parameters \(w_0 \in \mathbb{R}^N\), training set \(\mathcal{D}\), subspace dimension \(d\), seed \(s\), optimizer \(\mathrm{Opt}\), initialization   \(\epsilon > 0\)
\STATE Generate a fixed partition map \(g : \{1,\ldots,N\} \to \{1,\ldots,d\}\) from seed \(s\)
\STATE Compute group sizes \(n_j = |\{i : g(i)=j\}|\) for \(j=1,\ldots,d\)
\STATE Initialize $\btheta_d = 0$
\STATE Implement \(P\) implicitly using only \(g(\cdot)\) and \(\{n_j\}_{j=1}^d\)
\FOR{each minibatch \(\mathcal{B} \subset \mathcal{D}\)}
    \STATE Form \(\Delta w\) implicitly using \((\Delta w)_i \leftarrow (\btheta_d)_{g(i)} / \sqrt{n_{g(i)}}\) for all \(i\)
    \STATE Set adapted parameters \(w \leftarrow w_0 + \Delta w\)
    \STATE Compute loss \(\mathcal{L}_{\mathcal{B}}(w)\)
    \STATE Compute \(\nabla_{\btheta_d}\mathcal{L}_{\mathcal{B}}\) from \(\nabla_w \mathcal{L}_{\mathcal{B}}\)
    \STATE Update \(\btheta_d \leftarrow \mathrm{Opt}(\btheta_d, \nabla_{\btheta_d}\mathcal{L}_{\mathcal{B}})\)
\ENDFOR
\STATE \textbf{return} \(s, \btheta_d\)
\end{algorithmic}
\end{algorithm}
\section{Comparison of Parameter Counts}
\label{app:params}

Table~\ref{tab:params} summarizes the number of trainable parameters introduced by each method, both per adapted layer and globally. Full fine-tuning scales with the total model size $N$, while LoRA introduces $r(m+n)$ parameters per layer, giving a global count $D = \sum_\ell r(m_\ell + n_\ell)$ that grows with depth and layer width. BitFit and VeRA are more frugal, tuning only bias vectors or diagonal scaling factors, but their global counts still accumulate over layers. FourierFT decouples the budget from layer dimensions by fixing $d$ frequencies per layer, yielding a global count of $L \times d$. GPart and Uni-LoRA go one step further: both parameterize the entire adaptation through a \emph{single} global vector $\boldsymbol{\theta}_d \in \mathbb{R}^d$, shared across all $L$ adapted layers, so the parameter count is independent of model depth and layer dimensions. The key distinction between the two methods is therefore not in parameter count — which is identical by construction in our experiments — but in how $\boldsymbol{\theta}_d$ is mapped back to weight space.

\begin{table}[h]
\caption{Trainable parameter counts per adapted layer and globally across a model with $L$ adapted layers of dimensions $W_\ell \in \mathbb{R}^{m_\ell \times n_\ell}$, with $N$ denoting total model parameters.}
\label{tab:params}
\centering
\begin{tabular}{lcc}
\toprule
Method & Per layer & Global \\
\midrule
Full FT & $mn$ & $N$ \\
LoRA (rank $r$) & $r(m+n)$ & $\sum_\ell r(m_\ell + n_\ell) = D$ \\
BitFit & $m$ (bias only) & $\sum_\ell m_\ell$ \\
VeRA & $m + n$ (diag.\ scaling) & $\sum_\ell (m_\ell + n_\ell)$ \\
FourierFT & $d$ & $L \times d$ \\
Uni-LoRA & \multicolumn{2}{c}{$d$ (global, shared across all layers)} \\
GPart & \multicolumn{2}{c}{$d$ (global, shared across all layers)} \\
\bottomrule
\end{tabular}
\end{table}
\section{Implementation Details}
\label{app:implementation-details}

All experiments are run on a single NVIDIA H100 80GB GPU. We use AdamW throughout, with task-specific learning rates for the classification head and the subspace parameters \(\boldsymbol{\theta}_d\) tuned independently. The partition matrix \(P\) is fixed at initialization and applied to the \texttt{target\_modules} matrices in every transformer block, excluding the task head. Full per-task hyperparameters for natural language understanding, mathematical reasoning, and computer vision are provided in Tables~\ref{tab:impl-glue}, \ref{tab:impl-math}, and~\ref{tab:impl-vision}, respectively.

Although GPart operates through a global partition and therefore introduces less regular memory access than low-rank methods, in practice we observe only about a 10\% wall-clock slowdown relative to Uni-LoRA. This modest overhead is largely due to an implementation that never explicitly materializes \(P\); instead, the method stores only the global trainable vector, the parameter-to-group assignments, and the associated scaling factors, and applies the broadcast update implicitly during the forward and backward passes.

More precisely, each adapted layer gathers the relevant entries of \(\boldsymbol{\theta}_d\), rescales them, and reshapes them directly into weight and bias updates before reusing the underlying dense linear operation. In the common single-adapter setting, we further use a custom autograd function that fuses this implicit reconstruction with the linear layer and computes the gradient with respect to \(\boldsymbol{\theta}_d\) by a direct grouped accumulation, reducing graph overhead and avoiding explicit sparse matrix operations.

Finally, the trainable vector \(\boldsymbol{\theta}_d\) is stored once at the model level rather than duplicated across layers, while the partition itself is represented through compact index and scale buffers. Together, these choices substantially reduce the practical memory and runtime overhead of operating in the full model weight space.

\begin{table}[ht]
\caption{Hyperparameters for GLUE experiments.}
\label{tab:impl-glue}
\centering
\small
\begin{tabular}{ll cccccc}
\toprule
Model & Hyperparameter & SST-2 & MRPC & CoLA & QNLI & RTE & STS-B \\
\midrule
 & Optimizer           & \multicolumn{6}{c}{AdamW} \\
 & Weight Decay        & \multicolumn{6}{c}{0.1} \\
 & Warmup Ratio        & \multicolumn{6}{c}{0.06} \\
 & LR Schedule         & \multicolumn{6}{c}{Linear} \\
 & Init.\ of $\boldsymbol{\theta}_d$ & \multicolumn{6}{c}{0} \\
 & LR ($\boldsymbol{\theta}_d$) & \multicolumn{6}{c}{5E-3} \\
 & $|\boldsymbol{\theta}_d|$ & \multicolumn{6}{c}{23,040} \\
 & Batch Size Per GPU  & \multicolumn{6}{c}{32} \\
 & Target Modules & \multicolumn{6}{c}{\texttt{["query", "value"]}} \\
 \midrule
\multirow{5}{*}{\rotatebox[origin=c]{90}{\textsc{Base}}}
 & Epochs              & 60    & 30    & 80    & 25    & 160   & 80    \\
 & LR (Head)           & 5E-4  & 1E-3  & 2E-4  & 1E-3  & 1E-2  & 2E-4  \\
 & Max Seq.\ Len.      & \multicolumn{6}{c}{512} \\
 & Training Time (Total)    & \multicolumn{6}{c}{154 minutes} \\
 & GPU Memory    & \multicolumn{6}{c}{19GiB} \\
\midrule
\multirow{5}{*}{\rotatebox[origin=c]{90}{\textsc{Large}}}
 & Epochs              & 30    & 30    & 60    & 25    & 120    & 80    \\
 & LR (Head)           & 2E-4  & 1E-3  & 2E-3  & 1E-3  & 1E-2  & 2E-4  \\
 & Max Seq.\ Len.      & \multicolumn{6}{c}{128} \\
 & Training Time (Total)    & \multicolumn{6}{c}{66 minutes} \\
 & GPU Memory    & \multicolumn{6}{c}{15GiB} \\
\bottomrule
\end{tabular}
\end{table}

\begin{table}[ht]
\caption{Hyperparameters for mathematical reasoning experiments.}
\label{tab:impl-math}
\centering
\small
\resizebox{\textwidth}{!}{%
\begin{tabular}{ll ccccc}
\toprule
 & Hyperparameter & Qwen-2.5-0.5B & Qwen-2.5-3B & Qwen-2.5-7B & Gemma-7B & Llama-3.1-8B \\
\midrule
 & Optimizer            & \multicolumn{5}{c}{AdamW} \\
 & LR Schedule          & \multicolumn{5}{c}{Cosine} \\
 & Batch Size           & \multicolumn{5}{c}{2} \\
 & Accumulation Steps   & \multicolumn{5}{c}{8} \\
 & Warmup Ratio         & \multicolumn{5}{c}{0.05} \\
 & Weight Decay         & \multicolumn{5}{c}{0.05} \\
 & Epochs               & \multicolumn{5}{c}{2} \\
 & Warmup Ratio         & \multicolumn{5}{c}{0.02} \\
 & Target Modules & \multicolumn{5}{c}{\texttt{["q\_proj", "k\_proj", "v\_proj", "o\_proj"]}} \\
 & Init.\ of $\boldsymbol{\theta}_d$ & \multicolumn{5}{c}{0} \\
 & Max Seq. Len.         & \multicolumn{5}{c}{2048} \\
 & LR ($\boldsymbol{\theta}_d$)      & \multicolumn{5}{c}{2E-4}  \\
 & $|\boldsymbol{\theta}_d|$         & 131,072 & 524,288 & 524,288 & 524,288 & 524,288 \\
 & Training Time   & 4h & 8h & 15h & 21h & 17h \\
 & GPU Memory    & 19GiB & 31GiB & 45GiB & 66GiB & 53GiB \\
\bottomrule
\end{tabular}%
}
\end{table}

\begin{table}[ht]
\caption{Hyperparameters for computer vision experiments.}
\label{tab:impl-vision}
\centering
\small
\resizebox{\textwidth}{!}{%
\begin{tabular}{ll cccccccc}
\toprule
Model & Hyperparameter & OxfordPets & StanfordCars & CIFAR10 & DTD & EuroSAT & FGVC & RESISC45 & CIFAR100 \\
\midrule
 & Optimizer          & \multicolumn{8}{c}{AdamW} \\
 & Weight Decay       & \multicolumn{8}{c}{0.01} \\
 & LR Schedule        & \multicolumn{8}{c}{Linear} \\
 & Warmup Ratio       & \multicolumn{8}{c}{0.05} \\
 & Init.\ of $\boldsymbol{\theta}_d$ & \multicolumn{8}{c}{0} \\
 & Epochs             & \multicolumn{8}{c}{20} \\
 & Target Modules & \multicolumn{8}{c}{\texttt{["query", "value"]}} \\
\midrule
\multirow{6}{*}{\rotatebox[origin=c]{90}{\textsc{ViT-Base}}}
 & LR (Head)                    & 2E-2 & 3E-3 & 5E-3 & 1E-2 & 1E-2 & 1E-2 & 5E-2 & 5E-3 \\
 & LR ($\boldsymbol{\theta}_d$) & 1E-2 & 5E-2 & 1E-2 & 1E-2 & 5E-2 & 5E-2 & 8E-2 & 5E-2 \\
 & $|\boldsymbol{\theta}_d|$    & \multicolumn{8}{c}{72,000} \\
 & Batch Size Per GPU & \multicolumn{8}{c}{64} \\
 & Training Time    & 4m & 9m & 26m & 4m & 11m & 15m & 15m & 27m \\
 & GPU Memory    & \multicolumn{8}{c}{5GiB} \\
\midrule
\multirow{6}{*}{\rotatebox[origin=c]{90}{\textsc{ViT-Large}}}
 & LR (Head)                    & 2E-2 & 3E-3 & 5E-3 & 1E-2 & 1E-2 & 1E-2 & 5E-2 & 5E-3 \\
 & LR ($\boldsymbol{\theta}_d$) & 1E-2 & 5E-2 & 1E-2 & 1E-2 & 5E-2 & 5E-2 & 8E-2 & 5E-2 \\
 & $|\boldsymbol{\theta}_d|$    & \multicolumn{8}{c}{144,000} \\
 & Batch Size Per GPU & \multicolumn{8}{c}{32} \\
 & Training Time    & 6m & 16m & 56m & 8m & 24m & 17m & 27m & 51m \\
 & GPU Memory    & \multicolumn{8}{c}{8GiB} \\
\bottomrule
\end{tabular}%
}
\end{table}
\section{Why the \texorpdfstring{$BA$}{BA} Factorization Is Problematic}
\label{sec:analysis}

The $BA$ parametrization introduced by \citet{hu2022lora} is a pragmatic choice: it expresses
a rank-$r$ update $\Delta W \in \mathbb{R}^{d_\mathrm{out} \times d_\mathrm{in}}$ using only
$r(d_\mathrm{in} + d_\mathrm{out})$ parameters, a substantial saving when $r \ll \min(d_\mathrm{in},
d_\mathrm{out})$.
Despite its empirical success, this factorization introduces a collection of theoretical
pathologies that have spawned a large body of follow-up work.
We identify the two root causes---a representational non-uniqueness
and the asymmetric coupling of the bilinear factors---and show that
GPart eliminates both by construction.

\subsection{Representational non-uniqueness of LoRA}
\label{sec:gauge}

The bilinear factorization $\Delta W = BA$ is not unique.
For any invertible matrix $G \in \mathbb{R}^{r \times r}$, the substitution
\[
    B \;\longmapsto\; BG^{-1}, \qquad A \;\longmapsto\; GA
\]
leaves $\Delta W$ unchanged.
The set of all pairs that produce a given $\Delta W$ therefore forms an
equivalence class
\[
    [(B, A)] \;=\; \bigl\{(BG^{-1},\, GA) : G \in GL(r)\bigr\},
\]
and the true object of interest---the linear map $\Delta W$---is an element
of the quotient.
Any quantity computed from $(B, A)$ that is not invariant under this family
of substitutions is a property of the \emph{representation}, not of the
underlying adapter.

This non-uniqueness has concrete consequences throughout the LoRA literature:

\paragraph{Routing and merging.}
Methods that operate directly on the raw $(B,A)$ factorization---such as the
rank-one routing of \citet{hilora2025} and the clustering-based merging of
\citet{lego2025}---perform computations on representation-dependent objects.
Two adapters that implement identical linear maps $\Delta W$ but were trained with
different random seeds will in general have different $(B, A)$ pairs, and will
therefore be treated as dissimilar by any metric defined on those pairs.
The permutation invariance claimed by \citet{lego2025} accounts only for the
discrete subgroup of signed permutation matrices inside $GL(r)$; the remaining
continuous non-uniqueness goes unaddressed.

\paragraph{Scale non-invariance.}
A special case of this non-uniqueness is the \emph{scale symmetry}:
$(B, A) \mapsto (\lambda B,\, \lambda^{-1} A)$ for any $\lambda \neq 0$.
Under this transformation $\Delta W$ is unchanged, but any method that computes
norms or distances on the columns of $B$ or the rows of $A$ individually will
produce different results.
This affects, for instance, the token-level routing scores of \citet{hilora2025},
which score hidden states against columns of $B$ and are therefore arbitrarily
sensitive to the optimizer-induced scaling of the factorization.

\paragraph{The SVD as a canonical representative.}
\citet{ostapenko2024arrow} and \citet{fleshman2025spectr} avoid this problem
by working with the singular value decomposition $\Delta W = U\Sigma V^\top$
rather than the raw $(B, A)$ pair.
When all singular values are distinct---which holds generically for fully
trained adapters, since the set of matrices with repeated singular values
has measure zero in $\mathbb{R}^{d_\mathrm{out} \times d_\mathrm{in}}$---this
picks a unique representative from each equivalence class, up to a finite
group of sign flips on singular vector pairs.
Since both methods operate post-hoc on trained adapters, this is a principled
and largely complete fix within the LoRA framework.
GPart eliminates the non-uniqueness entirely by construction: the map
$\theta_d \mapsto P\theta_d$ is injective, so every adapter has a unique
representation without requiring a post-hoc canonicalization step.

\subsection{Pathologies of the bilinear factorization}
\label{sec:factorization}

The factorization $\Delta W = BA$ introduces two coupled matrices whose
asymmetric roles generate a cluster of optimization pathologies.
Unlike the representational non-uniqueness discussed in
Section~\ref{sec:gauge}, these pathologies are not about which
$\Delta W$ can be expressed, but about how the optimization
landscape over $(B, A)$ behaves relative to the underlying
objective over $\Delta W$.
GPart eliminates both by replacing the bilinear factorization
with a single linear map $\theta_d \mapsto P\theta_d$, removing
the coupling entirely.

\paragraph{Asymmetric optimization dynamics.}
The gradient of the loss with respect to $B$ depends on $A$ and vice versa,
creating coupled dynamics that are a structural consequence of the bilinear
factorization and are not present in full fine-tuning.
\citet{hayou2024loraplus} showed that using the same learning rate for $A$
and $B$ is provably suboptimal for large-width networks, and proposed LoRA$+$
which assigns different learning rates to the two matrices.
\citet{kalajdzievski2023rslora} showed that the standard $\alpha/r$ scaling
causes gradient collapse as rank increases, and derived the corrected
$\alpha/\sqrt{r}$ scaling.
Both issues are absent in GPart by construction: since there is no bilinear
factorization, there are no coupled factors $A$ and $B$ to which these
pathologies can apply.

\paragraph{Initialization pathology.}
LoRA initializes $B = 0$ so that $\Delta W = BA = 0$ at the start of training,
ensuring the fine-tuned model begins from the pretrained weights.
This convention is structurally forced by the bilinear factorization: there is no
symmetric way to initialize $(B, A)$ to zero output while keeping both matrices
in a generic position.
The asymmetry has spawned a line of work proposing alternative initializations,
including PiSSA \citep{meng2024pissa}, which initializes from the top-$r$ singular
components of $W_0$, and LoftQ \citep{li2023loftq}, which handles the quantized
setting.
Recent work suggests that these gains may be largely attributable to the learning
rate regimes they induce rather than the initializations themselves
\citep{lee2026learningratemattersvanilla}, which is consistent with the view that the initialization
literature is patching a symptom rather than the root cause.
In GPart, any initialization of $\btheta_d$ produces a well-defined $\Delta W = P\btheta_d$
from step one, and the question of how to initialize the adapter does not interact
with the structure of the map itself.

\subsection{Summary}
\label{sec:analysis_summary}

Table~\ref{tab:pathologies} summarizes the pathologies of the $BA$
factorization discussed above, the fixes proposed in the literature,
and whether each is absent in GPart by construction.

\begin{table}[h]
\caption{Pathologies of the LoRA $BA$ factorization and their status
in GPart. $\checkmark$ = present; $\times$ = absent by construction.
The final column indicates structural absence, not a fix.}
\label{tab:pathologies}
\centering
\small
\begin{tabular}{lccc}
\toprule
Pathology & LoRA & Proposed fix & GPart \\
\midrule
$GL(r)$ non-uniqueness    & $\checkmark$ & ARROW, SpectR          & $\times$ \\
Scale non-invariance      & $\checkmark$ & ARROW, SpectR          & $\times$ \\
Coupled gradient dynamics & $\checkmark$ & LoRA$+$, rsLoRA        & $\times$ \\
Initialization pathology  & $\checkmark$ & PiSSA, LoftQ           & $\times$ \\
\bottomrule
\end{tabular}
\end{table}

The pattern is consistent: every identified pathology traces to either
the representational non-uniqueness of the $BA$ factorization or the
asymmetric coupling of its two factors, and GPart eliminates both
simultaneously by replacing the bilinear map with a single linear
projection.
This is not merely a list of engineering improvements; it is a
structural consequence of recovering the clean theoretical properties
that motivated the original intrinsic dimensionality hypothesis of
\citet{aghajanyan2021} before the $BA$ detour was introduced.
\section{Licenses and asset usage}

We document the external models and datasets used in this work, together with their source URLs and publicly stated licenses.

\paragraph{Natural language understanding.}
We use RoBERTa-base and RoBERTa-large, released by Facebook AI under the MIT License and available at \url{https://huggingface.co/roberta-base} and \url{https://huggingface.co/roberta-large}. We evaluate on the GLUE benchmark, available at \url{https://gluebenchmark.com/}; GLUE is a collection of public benchmark tasks whose component datasets are distributed under their respective licenses.

\paragraph{Mathematical reasoning.}
We use Qwen2.5-0.5B, Qwen2.5-3B, and Qwen2.5-7B from Qwen, available at \url{https://huggingface.co/Qwen/Qwen2.5-0.5B}, \url{https://huggingface.co/Qwen/Qwen2.5-3B}, and \url{https://huggingface.co/Qwen/Qwen2.5-7B}, released under Apache 2.0 license. We also use Gemma-7B, which is gated under Google’s Gemma usage license at \url{https://huggingface.co/google/gemma-7b}, and Llama-3.1-8B, which is distributed under the Llama 3.1 Community License at \url{https://huggingface.co/meta-llama/Llama-3.1-8B}. For training and evaluation, we use MetaMathQA (\url{https://huggingface.co/datasets/meta-math/MetaMathQA}), GSM8K (\url{https://huggingface.co/datasets/openai/gsm8k}), and MATH (\url{https://github.com/hendrycks/math}); these assets are commonly distributed under permissive academic/open licenses, with MetaMathQA and GSM8K publicly listed under MIT in prior work.

\paragraph{Vision.}
We use the Vision Transformer models ViT-Base and ViT-Large from Google, available at \url{https://huggingface.co/google/vit-base-patch16-224} and \url{https://huggingface.co/google/vit-large-patch16-224} and listed under the Apache 2.0 license. We evaluate on Oxford-IIIT Pets (\url{https://www.robots.ox.ac.uk/~vgg/data/pets/}), Stanford Cars (\url{https://ai.stanford.edu/~jkrause/cars/car_dataset.html}), CIFAR-10 (\url{https://www.cs.toronto.edu/~kriz/cifar.html}), DTD (\url{https://www.robots.ox.ac.uk/~vgg/data/dtd/}), EuroSAT (\url{https://github.com/phelber/EuroSAT}), FGVC-Aircraft (\url{https://www.robots.ox.ac.uk/~vgg/data/fgvc-aircraft/}), RESISC45 (\url{https://huggingface.co/datasets/timm/resisc45}), and CIFAR-100 (\url{https://www.cs.toronto.edu/~kriz/cifar.html}).

All assets were used in accordance with their publicly stated terms. We did not use proprietary or closed-access datasets.


\end{document}